\newcommand{\ifaf}{if and only if}
\newcommand{\suchthat}{s.t.}
\newcommand{\pos}{\ensuremath{\mathtt{POS}}}
\newcommand{\npos}{\ensuremath{\mathtt{NEG}}}
\newtheorem{prenot}{Notation}[section]
\begin{document}
\title{Evaluating the Correctness of Explainable AI Algorithms for Classification}
%
%
\author{Orcun Yalcin, Xiuyi Fan, Siyuan Liu}
%
%
\institute{Swansea University, The UK, SA2 8PP} 
%
\maketitle              
\begin{abstract}
Explainable AI has attracted much research attention in
recent years with feature attribution algorithms, which compute ``feature
importance'' in predictions, becoming increasingly popular. However,
there is little analysis of the validity of these algorithms as there is
no ``ground truth'' in the existing datasets to validate their correctness. In this work,
we develop a method to quantitatively evaluate the correctness of
XAI algorithms by creating datasets with known explanation ground
truth. To this end, we focus on the binary classification
problems. String datasets are constructed using formal language derived from a grammar. A string is {\em positive} if and only if a certain property is fulfilled. Symbols serving as explanation ground truth in a positive string are part
of an explanation if and only if they contributes to fulfilling the property. Two popular feature attribution explainers,
Local Interpretable Model-agnostic Explanations (LIME) and
SHapley Additive exPlanations (SHAP), are used in our experiments.We show that: (1) classification accuracy is
positively correlated with explanation accuracy; (2) SHAP provides
more accurate explanations than LIME; (3) explanation accuracy is
negatively correlated with dataset complexity.

\keywords{Binary classification  \and Feature importance \and Language and grammar.}
\end{abstract}
\section{Introduction}
\label{sec:intro}

Explainable AI (XAI) is a fast growing research area in AI that aims
to provide insight into processes that AI uses to
conclude \cite{Adadi18}. The goal of enabling explainability in AI
systems is ``to ensure algorithmic predictions and any input data
triggering those predictions can be explained to
non-experts'' \cite{Carvalho19}.  XAI can create practical machine
learning methods that produce more human-understandable models while
preserving a high accuracy of predictions.

Two main categories of approaches have been proposed in the literature to
address the need for explainability in AI systems: (1) intrinsically interpretable methods
\cite{Rudin19}, in which prediction and explanation are both
produced by the same underlying mechanism, and (2) model-agnostic methods
\cite{molnar2019}, in which explanations are treated as a post hoc
exercise and are separated from the prediction model. In the case for
methods (1), while many intrinsically interpretable models, such as
short decision trees, linear regression, Naive Bayes, k-nearest
neighbours and decision rules \cite{Yang17} are easy
to understand, they can be weak for prediction and suffer from
performance loss in complex tasks. On the contrary, model agnostic
approaches such as local surrogate \cite{Ribeiro16}, global surrogate
\cite{Alonso18}, feature importance \cite{Fisher18}, and
symbolic Bayesian network transformation \cite{Shih18} will separate explanation from prediction and produce a comparatively better prediction result.

Among the model agnostic
methods, giving users explanations in the form of feature importance has been viewed as an effective approach in XAI -- as each feature makes some contribution to the prediction, by knowing
the ``weights'' of features, one can better understand how a prediction is made. In addition to the Local Interpretable
Model-Agnostic Explanations (LIME) introduced in \cite{Ribeiro0G16}, competing feature importance approaches
including the SHapley Additive exPlanations (SHAP) \cite{Lundberg17} have been developed.
Although they have attracted much research attention, there is little work on evaluating the correctness of the explanations produced by the feature importance explainers. After all, what explainers do is to output numbers associated to features. How can we know the feature importance asserted by such explainers is correct explanations? Taking the well known Mushroom dataset \cite{Dua17} as an example, there are 8124
mushroom instances in this dataset where each instance is either edible or poisonous. With a standard random forest classifier, one achieves classification AUC
0.98 on a 80/20 training/testing split. However, upon questioning top features
for predicting the 3916 poisonous instances, LIME and SHAP differ on 1527 of
them when the highest ranked feature is considered; and 2744 of them when
the top two highest ranked features are considered. The question is: Is LIME or SHAP is correct, or are both correct despite they reporting drastically different
top features?

With the few exceptions discussed in Section~\ref{sec:relatedWork},
literatures on evaluating correctness are hard to
find as only 5\% of the researchers that have studied AI
interpretability or explainability have focused on assessing
them \cite{Adadi18}. Existing evaluations, see e.g. \cite{Mohseni20}
for a recent overview, are predominantly based on human
inputs. However, as discussed by \cite{Das20,Arrieta20,Ignatiev20},
any evaluation metric that is completely based on human
inputs is fallible due to human bias and subjectivity. Even though
some other metrics for XAI methods such as ``explanation
goodness'', ``explanation satisfaction scale'', ``explanation
usefulness'', and ``user trust'' are discussed
in \cite{Hoffman18,Mohseni20}, quantitative analysis on explanation
correctness without human input is needed.

Various reasons lead to the lack of a quantitative study on correctness of
XAI methods. The chief reason is the lack of {\em explanation truth} in
datasets. For example, in the Mushroom
dataset, it is impossible to know what the most important features for a mushroom to be
poisonous are from the
dataset itself. In other words, there is no ground truth to validate
the correctness of XAI methods. Therefore, in this work, we give an approach to generate datasets for binary classification problems with explanation ground truth. In this way, we can compare feature importance XAI methods
against the designed explanation ground truth while performing prediction tasks
as normal. We do not rely on human inputs in our evaluation as the explanation ground truth are coded into our datasets. Although our dataset generation
method may not capture all real world classification tasks, it can be viewed as a
systematic and sound benchmark for comparing feature importance XAI methods.



%

More specifically, we formally define correct explanations for binary classification
and introduce an evaluation metric, {\em $k$-accuracy}, in
Section~\ref{sec:exp_classification}. We present an algorithm for
generating datasets with correct explanation ground truth using formal
grammar in Section~\ref{sec:dataset}. We then introduce {\em
grammatical complexity (G-complexity)}, modelled as the Kolmogorov
Complexity \cite{Li19}, as a controllable complexity
measure for dataset generation in Section~\ref{sec:complexity} and
show performances of classification and explanation are negatively
correlate to it. We experiment with SHAP and LIME throughout in Section ~\ref{sec:evaluation}.

Contributions of this paper are: (1) a method for generating datasets
that allow qualitative study of explanation correctness; (2) the
discovery of a positive relation between classification accuracy and
explanation performance; (3) quantitative evaluation of SHAP and LIME
with the proposed method; and (4) the discovery of a negative relation
between SHAP performance and dataset complexity.

\section{Explanation for Binary Classification}
\label{sec:exp_classification}

In this work, we focus on the problem of binary classification on categorical data. To construct datasets with explanation ground truth, we need a way to generate data instances that can be labelled in two ways, positive ($\pos$) and negative
($npos$). For each instance, we need to be able to clearly specify features that are
``responsible'' for the labelling. Not to trivialise the process, the classification
labelling cannot be solely determined by some fixed features in the dataset although such features would explain the classification. For example, \emph{``when feature 2 is above 50, then label the instance $\pos$,''} would be considered an over simplification for real world classification tasks. We must generate data instances such that the classification labelling is determined
by a subset of feature values; yet, the position of explanation subset cannot be
fixed throughout the dataset.

To achieve this, we use formal grammars and their corresponding languages. The language and grammar notions we will use are as follows.
\subsection{Language and Grammar Notions}
\begin{itemize}[noitemsep]
\item
  \emph{Alphabet} $\Sigma$ is a finite set of symbols.

\item
  \emph{String} is a finite sequence of symbols from $\Sigma$.

\item
  \emph{Language} $L$ is a subset of $\Sigma^{*}$.
\end{itemize}
A Grammar $G$ is a set of production rules and it describes the
strings in the language. Formally, given alphabet $\Sigma$, a grammar
$G$ is a tuple $\langle N, T, P, S \rangle$, where:
\begin{itemize}[noitemsep]
    \item
      $N$ is a set of nonterminal symbols, $N \cap \Sigma = \{\}$;

    \item
      $T = \Sigma$ is a set of terminal symbols;

    \item
      $P$ is a set of production rules from $N$ to
      $(N \cup T)^*$; and

    \item
      $S \in N$ the start symbol.
\end{itemize}
The parse trees for $G$ are tress with the following conditions:
\begin{enumerate}[noitemsep]
\item
Each interior node is labelled by a symbol in $N$.

\item
Each leaf is labelled by either a symbol in $N$, a string of terminal
symbols in $T$, or $\epsilon$.

\item
If an interior node is labelled as $A$, and its children are labelled as
$X_1, X_2, \ldots, X_k$ respectively, from the left, then
$A \rightarrow X_1, X_2, \ldots, X_k$ is a production rule in $P$.

\end{enumerate}

A grammar is ambiguous if there exists a string in the language of the
grammar \suchthat{} the string can have more than one parse tree.

\subsection{Correct Explanation}
We formulate the binary clarification problem on categorical data as the following {\em string classification}. Given a set of strings $S$, \suchthat{} each $s \in S$ is of
the same length $k > 0$; there exists a labelling function $h$
which maps each string to a class $c \in C$, with $C$ being all
possible classes for $S$. The classification task is to identify a
classifier $g: S \mapsto C$ \suchthat{} $g(s) = h(s)$. In this process,
each symbol in $s$ is a feature of $s$. For instance, the string 0011
has four features, $f_1, f_2, f_3, f_4$, with
\begin{center}
$f_1$ = 0, $f_2$ = 0, $f_3$ = 1, and $f_4$ = 1;
\end{center}
and the string 1100 has four features with
\begin{center}
$f_1$ = 1, $f_2$ = 1, $f_3$ = 0, and $f_4$ = 0.
\end{center}
Many real binary classification problems are the instances of the string classification. For example, the Mushroom dataset mentioned in Section 1 is a
dataset with strings of length 22 on the alphabet
$\{a,b,c,d,e,f,g,h,$ $k,l,m,n,o,p,r,s,t,u,v,w,x,y,z,?\}$.\footnote{See
  \cite{Dua17} for the meaning of feature symbols.} A instance of the mushroom dampest is shown in Table \ref{table:1}.
  
\vspace{-15pt}
\begin{table}[!ht]
\caption{Explanation given by SHAP and LIME for a correctly classified
data instance from the Mushroom dataset.\label{table:1}}
\begin{center}
\begin{tabular}{|p{3.8cm}|p{3.8cm}|}
 \hline
 \multicolumn{2}{|l|}{Poisonous mushroom: {\bf xwnytffcbtbsswwpwopksu}} \\
 \hline
 \multicolumn{2}{|p{8.1cm}|}
 {Cap shape: conve{\bf x}, cap colour: {\bf w}hite, gill colour:
 brow{\bf n}, cap surface: scal{\bf y},  bruises: {\bf t}=bruises, odor:
 {\bf f}oul, gill attachment: {\bf f}ree, gill spacing: {\bf c}lose,
 gill size: {\bf b}road, stalk shape: {\bf t}apering, stalk root: {\bf
 b}ulbous, stalk-surface-above-ring: {\bf s}mooth,
 stalk-surface-below-ring: {\bf s}mooth, stalk-color-above-ring :
 {\bf w}hite, stalk-color-below-ring: {\bf w}hite, veil-type: {\bf
 p}artial, veil-color: {\bf w}hite, ring-number: {\bf o}ne, ring-type:
 {\bf p}endant, spore-print-color: blac{\bf k},  population:
 {\bf s}cattered, habitat: {\bf u}rban.} \\
 \hline
\end{tabular}
\vspace{-15pt}
\end{center}
\end{table}
  
Each string in
this dataset is labelled in one of the two classes: poisonous or
edible. Since each instance can belong to only one of the two classes,
classification on this dataset is binary.


Before we define \emph{correct explanation}, We first give the definition of {\em substring}. 
\begin{definition}
Given a string $s = a_1 \ldots a_n$ over alphabet $\Sigma$, where
$\cdot \not\in \Sigma$, $s$ represents a data point $x$ with features
$f_1 \ldots f_n$, \suchthat{} $x \!=\! \{f_1 \!=\! a_1, \ldots,
f_n \!=\! a_n\}$. A {\em substring} $s'$ of $s$ is a string $s' \!=\!
a_1' \ldots a_n'$ over alphabet $\Sigma \cup \{\cdot\}$ with
$a_i' \in \{a_i, \cdot\}$, $i = 1 \ldots n$. $s'$ represents the set
$\{f_i \!=\! a_i \in x| a_i' \neq \cdot\}$.
\end{definition}
For instance, the substring $\cdot01\cdot$ of the string 0011 denotes
the set of two feature-values $\{f_2 = 0, f_3 = 1\}$.

We define our notions of {\em explanation} (for a $\pos$
classification) and {\em correct explanation} as follows.
\begin{definition}
  \label{dfn:explanation}

  Given a string $x \in S$ \suchthat{} $h(x) = \pos$, an {\em
    explanation} $e_x$ for the label $h(x)=\pos$ is a substring of
  $x$. We also say $e_x$ is an explanation for $x$ when there is no
    ambiguity.

  An explanation $e_x$ is {\em correct} (for $x$ being $\pos$)
  \ifaf{} for any string $x' \in S$, if $e_x$ is a substring of $x'$,
  then $h(x)=h(x')=\pos$.
  If $e_x$ is a correct explanation (for some label $h(x)$) and
  the length of $e_x$ is $k$, then we say that $e_x$ is a {\em
    $k$-explanation} (for $h(x)$).
\end{definition}

Definition~\ref{dfn:explanation} defines explanations as
substrings. Intuitively, a correct explanation for an instance is the
``core subset of features'' which is ``decisive''. In the sense that
regardless what other features might be, these ``core features'' alone
determine the outcome of the classification. We focus only on
explaining ``positive'' instances as ``core features'' can be
asymmetrical. An instance is ``negative'' not because certain features
are presented in this instance, but rather the lack of ``core
features''. In this sense, all features in a negative case
``collectively explain'' the negative classification.

We illustrate our notion of explanation as follows.

\begin{example}
  Let $S \!=\! \{00100, 00001, 10000, 11111, 00111,$ $10011\}$, the
  labelling function $h$ is \suchthat{}

  \begin{center}
    $h(11111)=h(00111)=h(10011)=\pos$,\\
    $h(00100)=h(00001)=h(10000)=\npos$.
  \end{center}

  There are $C(5,3) = 10$ 3-explanations for each of the
  three \pos{} strings. Correct 3-explanations include $111\cdot\cdot$
  for $g(11111)$, $\cdot\cdot111$ for $g(00111)$ and $1\cdot\cdot11$
  for $g(10011)$. The substring $00\cdot\cdot1$ is not a correct
  3-explanation for $g(00111)$; neither is $100\cdot\cdot$ correct for
  $g(10011)$.
\end{example}

If a dataset is noise free, then every $\pos$ sample in the dataset
has an explanation, formally:
\begin{proposition}
\label{prop:existance}

Given a dataset $S$, if there is no two strings $s_1, s_2 \in
S$ \suchthat{} $s_1=s_2$ and $h(s_1) \neq h(s_2)$, then for each
$s \in S$, \suchthat{} $h(s) = \pos$, $s$ has a correct non-empty
explanation $e_s$.
\end{proposition}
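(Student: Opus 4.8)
The plan is to avoid any search for a ``small'' explanation and simply exhibit an explicit correct one: the whole string itself. Given $s \in S$ with $h(s) = \pos$, take $e_s$ to be the substring of $s$ in which no position is replaced by the placeholder $\cdot$; that is, $e_s = s$ viewed as a string over $\Sigma \cup \{\cdot\}$. Since every string in $S$ has length $k > 0$, this $e_s$ represents the non-empty feature set $\{f_1 = a_1, \ldots, f_k = a_k\}$, so the non-emptiness requirement of the statement is immediate, and it only remains to verify correctness.

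To check correctness in the sense of Definition~\ref{dfn:explanation}, I would argue that for every $x' \in S$, if $e_s$ is a substring of $x'$ then $h(s) = h(x') = \pos$. The key observation is that $e_s$ contains no occurrence of $\cdot$, so the condition ``$e_s$ is a substring of $x'$'' forces $a_i' = a_i$ at every position $i$, i.e. $x'$ is literally the same string as $s$. At that point the hypothesis of the proposition is exactly what is needed: since $S$ contains no two occurrences of a single string carrying different labels, $x' = s$ implies $h(x') = h(s) = \pos$. Hence $e_s$ is a correct explanation for $s$ being $\pos$.

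The argument is short; the only place where a little care is needed is the reading of ``substring of $x'$'' in Definition~\ref{dfn:explanation} — one has to note that a dot-free substring matches a host string only when the two coincide symbol-for-symbol, so the universal quantifier over $x' \in S$ collapses to the single instance $x' = s$. The non-contradiction hypothesis is precisely what rules out the pathological case in which the dataset, viewed as a multiset, lists the same string twice with conflicting labels; without it even $e_s = s$ could fail to be correct. I expect no genuine obstacle here: no structural property of $S$ or of $h$ beyond the stated hypothesis is used, and in particular the proposition makes no claim about explanations being short, so minimality of correct explanations is deliberately left aside.
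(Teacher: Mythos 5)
Your proposal is correct and follows essentially the same route as the paper's own (sketched) proof: take $e_s = s$ itself, and use the no-conflicting-labels hypothesis to conclude that the only string matching a dot-free substring is $s$, so the explanation is correct. You simply spell out in more detail the step the paper leaves implicit.
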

\begin{proof}
(Sketch.) Since each string $s$ is labelled in only one way, $e_s = s$
  is a correct explanation.
\end{proof}


With correct explanation defined, to measure the correctness of an XAI
algorithm, we define $k$-accuracy as follows.
\begin{definition}
\label{dfn:kAcc}
Given a dataset $S$ over alphabet $\Sigma$ and a classifier $g$, for
each $x \in S$ \suchthat{} $g(x) = h(x) = \pos$, let $e_x$ be a correct
explanation for $h(x)$. Then, the {\em $k$-accuracy} of a
$k$-explanation $e'_x$ is $|\{a \in \Sigma | a$ is in both $e_x$ and
$e'_x\}|/k$.
\end{definition}
Note that $k$-accuracy is defined for correct \pos{}
classifications. As we use $k$-accuracy to measure explainer
performance, we calculate such accuracy only when the prediction
matches with the classification ground truth, i.e., when $g(x)=h(x)$.
Intuitively, we ask for explanations only when the prediction is
correct. This helps us to separate explainer performance from
classifier performance and prevent us from processing meaningless
explanations for wrong predictions. 
%

\section{Constructing Datasets with Explanation}
\label{sec:dataset}
To construct datasets with explanations, we create a grammar $G$ with language $L$. We let $L' \subseteq L$ be
the dataset. A string $s \in L'$ is labelled as $\pos$ \ifaf{} there is
some production rule $r$ in $G$ used more than $t>1$ times. Terminal
symbols associated with the production rule triggering the $\pos$
classification form the explanations for $s$.

We present our dataset and explanation construction with the following
example.

\begin{example}
\label{exp:main}

Given a grammar $G = \langle \{S,B,N,T,Y\},$ $\{0,1\}, P, S \rangle$
with $P$ being the following production rules:

\begin{center}
  \begin{tabular}{lccl}
    $S \rightarrow BB \mid NN \mid \epsilon$ &&&\\
    $B \rightarrow TT \mid YY \mid \epsilon$ &&&
    $N \rightarrow TY \mid YT \mid \epsilon$ \\
    $T \rightarrow 11 \mid 00$ &&&
    $Y \rightarrow 01 \mid 10$ \\
  \end{tabular}
\end{center}

\noindent
From $G$, we create a dataset $L$ containing 8-bit strings such as
11000000 and 00110000.
We let the threshold $t=2$ and use parse trees to count the times of
each production rule is used. For instance, the parse tree for the
string 11000000 is shown in Figure~\ref{fig:parseTree}
(left). Production rules used to generate the string are follows.

\begin{center}
\begin{tabular}{|cc|cc|}
\hline
Production Rule & Uses & Production Rule & Uses \\
\hline
$S \rightarrow BB$ & 1 & $B \rightarrow TT$ & 2 \\
$T \rightarrow 11$ & 1 & $T \rightarrow 00$ & 3 \\
\hline
\end{tabular}
\end{center}

Since the production rule $T \rightarrow 00$ is used 3 times, $3 >
t=2$, 11000000 is \pos{}. The explanation
is $\cdot\cdot000000$, which are terminal symbols in the production
rule $T \rightarrow 00$.

\vspace{-10pt}
\begin{figure}[H]
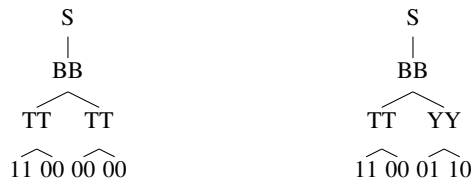

  \Tree[.S [.BB [.TT \text{11} \text{00} ]
      [.TT \text{00} \text{00} ]]]
  \Tree[.S [.BB [.TT \text{11} \text{00} ]
                           [.YY \text{01} \text{10} ]]]
\vspace{-5pt}
  \caption{Parse trees for 11000000 (left) and 11000110
  (right).\label{fig:parseTree}}
\vspace{-10pt}
\end{figure}

On the other hand, for the string 11000110 with its parse tree shown
in Figure~\ref{fig:parseTree} (right), as shown below, there is no
rule triggered more than once. Thus 11000110 is labelled \npos{} and
there is no explanation for 11000110.

\begin{center}
\begin{tabular}{|cc|cc|}
\hline
Production Rule  & Uses & Production Rule  & Uses \\
\hline
$S \rightarrow BB$ & 1 && \\
$B \rightarrow YY$ & 1 & $B \rightarrow TT$ & 1 \\
$T \rightarrow 11$ & 1 & $T \rightarrow 00$ & 1 \\
$Y \rightarrow 01$ & 1 & $Y \rightarrow 10$ & 1 \\
\hline
\end{tabular}
\end{center}
\end{example}

To formalize our explanation dataset approach, we start by defining
{\em explanation-grammar}, the class of grammar we used to generate
datasets, as follows.

\begin{definition}
  \label{def:eGrammar}

  A grammar $\langle N, T, P, S \rangle$ is an {\em
    explanation-grammar (e-grammar)} \ifaf{} all of the following
  conditions hold:

  \begin{enumerate}
  \item
    $N = N_v \cup N_t$, $N_v \cap N_t = \{\}$,

  \item
    For each production rule $r$ in $P$, $r$ is of the form:

    \begin{itemize}
    \item
    $A \rightarrow BC \mid \epsilon$, for $B,C \in N$, if $A \in N_v$,

    \item
    $A \rightarrow abc\ldots$, for $a,b,c \in T$, if $A \in N_t$.
    \end{itemize}
  \end{enumerate}
  For a production rule $r = A \rightarrow \ldots$, we say $r$ is a
  {\em non-terminal rule} if $A \in N_v$; otherwise, $r$ is a {\em
  terminal rule}.

\end{definition}

Grammar $G$ in Example~\ref{exp:main} is an e-grammar. Comparing with
the standard grammar definition (see Section~\ref{sec:exp_classification}), e-grammar
enforces new conditions as follows.
\begin{enumerate}
\item
Each nonterminal symbol can only be the left-hand side
of rules with either terminal or nonterminal symbols on the
right-hand side, but not both.

\item
Production rules with nonterminal symbols on the right-hand side are
called nonterminal rules; otherwise, they are terminal rules.

\item
There are either two nonterminal symbols or a single $\epsilon$ as the
right-hand side of a nonterminal rule.

\item
There are any positive number of terminal symbols as the right-hand
side of a terminal rule.
\end{enumerate}

The following holds trivially from Definition~\ref{def:eGrammar}.

\begin{proposition}
\label{prop:produceAll}

For any dataset $D$ consisting strings, there exists an e-grammar $G$
\suchthat{} the language of $G$ is $D$.
\end{proposition}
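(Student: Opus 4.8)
The plan is to give an explicit construction: for a (finite) dataset $D$, build an e-grammar $G$ whose language is exactly $D$, and then check the two clauses of Definition~\ref{def:eGrammar}. Write $D = \{w_1, \dots, w_m\}$, a finite set of strings over $T$ (a dataset being a finite collection of data points). The one point that needs care is that e-grammars are syntactically rigid: a rule for an $N_v$ symbol must have \emph{exactly two} symbols of $N$ on the right, or else be an $\epsilon$-rule, so there is no unary rule such as $S \to A_i$; and a rule for an $N_t$ symbol must emit a \emph{nonempty} block of terminals, so the empty string cannot be produced by a terminal rule. Both issues are repaired with a single auxiliary nonterminal.

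Concretely, first I would, for each \emph{nonempty} $w_i \in D$, introduce a fresh symbol $A_i \in N_t$ with the terminal rule $A_i \to w_i$ (valid, since $w_i$ is a nonempty string over $T$). Then I would set $N_v = \{S, E\}$ and add the rules $E \to \epsilon$, one rule $S \to A_i E$ for each nonempty $w_i$, and also $S \to \epsilon$ in case $\epsilon \in D$. This yields $G = \langle N_v \cup N_t,\, T,\, P,\, S\rangle$ with finitely many rules. Verifying that $G$ is an e-grammar is then immediate from the construction: $N_v$ and $N_t$ are disjoint; every rule whose left-hand side lies in $N_v$ is either $S \to A_i E$ with $A_i, E \in N$ or an $\epsilon$-rule; and every rule whose left-hand side lies in $N_t$ has only terminals (at least one) on the right.

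The remaining task is to show $L(G) = D$, which I would do by two inclusions. For $L(G) \subseteq D$: any derivation from $S$ either applies $S \to \epsilon$, producing $\epsilon \in D$, or applies some $S \to A_i E$; since $A_i$ carries the unique rule $A_i \to w_i$ and $E$ carries the unique rule $E \to \epsilon$, the derivation can only continue to $w_i \in D$. For $D \subseteq L(G)$: if $w_i \neq \epsilon$ then $S \Rightarrow A_i E \Rightarrow w_i E \Rightarrow w_i$, and if $\epsilon \in D$ then $S \Rightarrow \epsilon$. Hence $L(G) = D$.

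I do not expect a genuine obstacle here — as the paper itself remarks, the statement is essentially bookkeeping over Definition~\ref{def:eGrammar}. The only subtlety worth flagging in the write-up is precisely why the naive ``one rule per string'' idea fails to satisfy the binary-branching and nonemptiness constraints of e-grammars, which is exactly what the padding nonterminal $E$ (together with its $\epsilon$-rule) and the separate handling of $\epsilon \in D$ are there to fix.
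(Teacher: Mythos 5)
Your construction is correct and does establish the proposition, but it is not the paper's route, and the ``subtlety'' you flag is not actually an obstacle under Definition~\ref{def:eGrammar}. The paper's proof is exactly the naive one-rule-per-string grammar: $N = \{S\}$, $P = \{S \rightarrow s \mid s \in D\}$. This is a legitimate e-grammar because nothing in the definition forces the start symbol into $N_v$; one simply takes $N_t = \{S\}$ and $N_v = \{\}$, so every rule is a terminal rule with a nonempty terminal right-hand side (nonemptiness is automatic, since in this paper a dataset consists of strings of a common length $k > 0$, so $\epsilon \notin D$ and your special case never arises). Your version instead keeps $S \in N_v$, introduces fresh symbols $A_i \in N_t$ with $A_i \rightarrow w_i$, and pads the mandatory binary branching with an auxiliary $E \rightarrow \epsilon$; the two-inclusion argument for $L(G) = D$ is fine. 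What your construction buys is robustness (it would still work if $\epsilon$ were allowed in $D$, and it matches the stylistic convention of the paper's example grammars, where the start symbol expands to pairs of nonterminals --- indeed the paper uses a similar padding device in the proof of Proposition~\ref{prop:uniqueG}); what it costs is extra machinery that the paper's one-line construction avoids. So the proof stands, but you should drop the claim that the one-rule-per-string grammar fails the e-grammar constraints --- it only fails if one additionally insists that $S \in N_v$, which the definition does not.
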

\begin{proof}
Let $G = \langle N, T, P, S\rangle$ with $N = \{S\}$, $T$ be all
symbols in $D$, and $P = \{S \rightarrow s | s \in D\}$. It is easy to
see that the language of $G$ is $D$.
\end{proof}

Explanation datasets are created from e-grammars using
Algorithm~\ref{alg:label}, as follows.
\begin{algorithm}
  \caption{Generating explanation datasets containing strings with
    labels and explanations.  \label{alg:label} \vspace{-15pt}}

  \vspace{-10pt}
  {\bf Input:} e-grammar $G$, string length $l$, \pos-threshold $t$

  {\bf Output:} strings with classification and explanation labels

\begin{algorithmic}[1] 
  \STATE Let $D$ be empty

  \WHILE{stop condition not met} \label{line:stop}
    \STATE Randomly generate a string $s$ of length $l$ from
    $G$ \label{line:gen_string}

%
%

    \STATE Let $Rs$ be terminal rules used in generating
    $s$ \\ \suchthat{} each $r \in Rs$ is used more than $t$ times

    \IF {$Rs$ is not empty}
      \STATE Let $e$ be the substring of $s$ formed by terminal
      symbols in $Rs$

      \STATE add $(\pos, s, e)$ to $D$
    \ELSE
      \STATE add $(\npos, s, \{\})$ to $D$
    \ENDIF
    \ENDWHILE
  \STATE \textbf{return} $D$
\end{algorithmic}
\vspace{5pt}
\end{algorithm}
Algorithm~\ref{alg:label} takes an e-grammar $G$, a string
length $l$, and a threshold $t$ as its inputs to produce
a dataset $D$ containing strings with classification label and
explanation substrings. If a string $s$ in $D$ is labelled as $\pos$,
then there is a non-empty explanation $e$ produced for $s$. The stop
condition in Line~\ref{line:stop} determines when to exit from the
{\bf while} loop. It is a combination of: (1) whether sufficiently many
strings have been added to $D$; (2) whether $D$ contains balanced
\pos{} and \npos{} samples; and (3) whether the loop has been running
for too long. To generate a random string from a an e-grammar $G$ in
Line~\ref{line:gen_string}, we repeatedly perform random derivations
until a string with $l$ terminal symbols is produced while
prioritising derivations of terminal rules. If there is no more
terminal rule can be applied to the string when it reaches $l$
terminal symbols, then this string is returned as all non-terminal
symbols can be expanded to $\epsilon$; otherwise, drop this derivation
and start again.

Proposition~\ref{prop:exp-unique} sanctions that
Algorithm~\ref{alg:label} computes unique explanations from
unambiguous grammars.
\begin{proposition}
\label{prop:exp-unique}
Let $e$ be an explanation for some \pos{} string $s$ in a dataset $D$
generated with Algorithm~\ref{alg:label} using grammar $G$ and
$\pos$-threshold $t$. If $G$ is unambiguous, then $e$ is an
explanation for $s$ in any dataset generated from $G$ with $t$.
\end{proposition}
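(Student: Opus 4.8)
The plan is to argue that the explanation $e$ attached to a \pos{} string $s$ by Algorithm~\ref{alg:label} is a function of the triple $(s, G, t)$ alone, independent of the particular random derivation chosen in Line~\ref{line:gen_string} during a given run. Once this is shown, the proposition is immediate: in any other dataset generated from $G$ with the same $t$ that happens to contain $s$, the algorithm recomputes the same set $Rs$ of over-used terminal rules, hence the same \pos{} label and the same explanation substring $e$.

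First I would pin down how $e$ depends on the derivation. When Line~\ref{line:gen_string} outputs $s$, it has fixed a derivation of $s$ from $S$, which determines a parse tree $\tau$ of $s$ in $G$. The key observation is that everything the algorithm subsequently uses is a parse-tree invariant: the multiset of production rules ``used in generating $s$'' is exactly the multiset of (interior node, associated production) pairs of $\tau$, so the number of applications of each terminal rule does not depend on derivation order; and, because every terminal rule of an e-grammar emits a fixed block of terminals (Definition~\ref{def:eGrammar}), the positions in $s$ occupied by the symbols contributed by each terminal-rule application are read off from $\tau$. Hence $Rs$, and the substring $e$ it induces, are determined by $\tau$ and $t$.

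Next I would invoke unambiguity. By the definition given in Section~\ref{sec:exp_classification}, $G$ being unambiguous means that no string in the language of $G$ admits more than one parse tree; in particular $\tau$ is the unique parse tree of $s$ in $G$, so it is determined by $s$ and $G$ alone. Combining the two steps, $Rs$ and $e$ are functions of $(s, G, t)$, which is what we wanted: whether $s$ is labelled \pos{} (i.e. whether $Rs$ is non-empty) and the emitted explanation $e$ are the same in every dataset generated from $G$ with threshold $t$.

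The only real content is the parse-tree-invariance claim in the first step: one must be careful that the quantities the algorithm reads off a derivation — the count of each terminal rule and the set of string positions it fills — genuinely depend on the tree and not on incidental choices such as the order in which nonterminals are expanded or the placement of $\epsilon$-producing nonterminals. The restrictive shape of e-grammar rules makes this routine, but it should be stated explicitly so that unambiguity can then be applied in a single line. A minor point worth a remark: the prioritisation of terminal rules and the ``drop-and-restart'' behaviour in Line~\ref{line:gen_string} affect only \emph{which} strings are produced, never the parse tree associated with a produced string, so they play no role in the argument.
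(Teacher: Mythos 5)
Your argument is correct and follows essentially the same route as the paper's (sketch) proof: unambiguity of $G$ gives a unique parse tree for $s$, so the rule-application counts and hence the set $Rs$ and the explanation $e$ are determined by $(s,G,t)$ alone. The only difference is that you make explicit the derivation-invariance step (that rule counts and symbol positions are parse-tree invariants, independent of expansion order), which the paper leaves implicit.
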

\begin{proof}
(Sketch.) Since $G$ is unambiguous, $s$ has a unique parse tree. To
construct $s$, it always takes more than $t$ invocations of a certain
production rule to make $e$ in any dataset.
\end{proof}

Theorem~\ref{thm:grammar-exp} below is a key result of this work. It
sanctions that Algorithm~\ref{alg:label} generates strings with
correct explanations.

\begin{theorem}
\label{thm:grammar-exp}

Given $D$ generated from Algorithm~\ref{alg:label} with some e-grammar
$G$ and threshold $t$. If for all terminal rules $r$ in $G$, the
right-hand side of $r$ is unique and has the same length, then for
each $(\pos, s, e) \in D$, $e$ is a correct explanation for $s$ being
$\pos$.
\end{theorem}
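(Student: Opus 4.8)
The plan is to exploit the rigid block structure that the e-grammar conditions force on every derivation. Let $m$ be the common length of the right-hand side of all terminal rules of $G$, so $m \ge 1$. The first step is a structural observation about parse trees of strings $w \in L(G)$: a node labelled by a symbol of $N_t$ yields exactly one block of $m$ terminals (the right-hand side of the terminal rule applied there), a node labelled by a symbol of $N_v$ yields either $\epsilon$ or the concatenation of its two children's yields, and hence by induction on tree height every yield is a concatenation of length-$m$ blocks. Reading the $N_t$-nodes left to right, $w$ therefore splits canonically into $|w|/m$ consecutive, positionally fixed blocks, the $j$-th of which occupies positions $[(j-1)m+1,\, jm]$ and equals the right-hand side of the terminal rule used at the $j$-th $N_t$-node. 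Two consequences matter. First, since all strings in $D$ have the common length $l$, every $w \in D$ splits into $l/m$ blocks at the same positions, so blocks of different strings in $D$ can be compared positionwise. Second, by the uniqueness hypothesis, if a block of $w$ equals the right-hand side of a terminal rule $r$, then the $N_t$-node producing that block must use $r$, since no other terminal rule has that right-hand side.

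With this in hand I would derive the theorem directly. Fix $(\pos, s, e) \in D$. By Algorithm~\ref{alg:label}, $s$ was labelled $\pos$ because some terminal rule $r \in Rs$ is used $n$ times in the derivation of $s$ with $n > t$. These $n$ applications occur at $n$ of the $N_t$-nodes of $s$, hence at $n$ of $s$'s blocks --- call this set of block-positions $P$ --- each equal to the right-hand side of $r$; and by the construction of $e$ the symbols of every block produced by a rule of $Rs$ are retained in $e$, so $e$ agrees with $s$, on non-$\cdot$ symbols, on every block in $P$. Now let $x' \in D$ be any string having $e$ as a substring. Then $x'$ agrees with $e$ wherever $e$ is not $\cdot$, hence $x'$ equals the right-hand side of $r$ on every block in $P$. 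Applying the structural observation to $x'$: at each position of $P$ the corresponding $N_t$-node of $x'$'s derivation uses a terminal rule with the same right-hand side as $r$, which by uniqueness is $r$; thus $r$ is used at least $n > t$ times in generating $x'$. Consequently $r$ lies in the set $Rs$ computed for $x'$, that set is non-empty, and $x'$ was added to $D$ with label $\pos$. So every $x' \in D$ containing $e$ as a substring has $h(x') = \pos$; together with $h(s) = \pos$ this is exactly the condition in Definition~\ref{dfn:explanation}, so $e$ is a correct explanation for $s$ being $\pos$.

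The main obstacle is the canonical-decomposition step, and it is precisely there that both hypotheses are needed. The equal-length condition makes the partition into length-$m$ blocks positionally rigid, so that the blocks of $s$, $e$ and $x'$ line up and the phrase ``$x'$ equals the right-hand side of $r$ on the blocks in $P$'' is meaningful; without it a run of symbols could be carved into blocks in several ways and the application count of $r$ would not be forced. The uniqueness condition makes the content of a block determine the terminal rule that produced it, turning ``$x'$ exhibits the right-hand side of $r$ here'' into ``$x'$'s derivation applies $r$ here.'' I would also note two minor points: $e$ is a substring of $s$ by construction and is non-empty (it retains at least the $n \ge t+1 \ge 3$ blocks of $P$, hence at least $2m$ non-$\cdot$ symbols); and the argument never uses unambiguity of $G$ --- ambiguity bears only on uniqueness of the explanation across datasets (Proposition~\ref{prop:exp-unique}), not on its correctness, because the decomposition argument pins down the use of $r$ in whatever derivation the algorithm happened to pick for $x'$.
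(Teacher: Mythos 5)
Your proposal is correct and follows essentially the same route as the paper's own (sketched) proof: decompose every string into length-$m$ ``composition blocks'' arising from terminal rules, use the equal-length hypothesis to make the block boundaries positionally rigid across strings, and use RHS uniqueness to force any string containing $e$ to apply the same terminal rule more than $t$ times, hence be labelled \pos. Your write-up simply fills in the details the paper leaves implicit (the block alignment argument and the observation that unambiguity is not needed), so no further comment is required.
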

\begin{proof}
  (Sketch.) To show $e$ is a correct explanation, we need to show for
  all string $s'$ in $D$, if $e$ is a substring of $s'$, $s'$ is
  $\pos$. For each string $s$ in $D$, $s$ can be viewed as a sequence
  of ``composition blocks'' in which each block is the right-hand side
  of some terminal rule in $G$. Since $s$ is $\pos$, there must
  exist a production rule $r^*$ used $t'$ times with $t' > t$. And $e$
  contains $t'$ copies of the right-hand side of $r^*$. Since this is
  the only way of generating $e$ in $D$, for each $s'$ containing $e$,
  $r^*$ must be used $t'$ time as well, which makes $s'$ $\pos$.
\end{proof}

\section{Explanation and G-Complexity}
\label{sec:complexity}


Classification and explanation performances are affected by the
complexity of the grammar used to construct the dataset. We define
{\em grammar complexity} ({\em G-complexity}) as follows.

\begin{definition}
\label{dfn:G-complexity}

Given an e-grammar $G = \langle N, T, P,$ $S \rangle$, $m = |\{r \in
P|$ the right-hand side of $r$ is not $\epsilon \}|$ is the {\em
  G-complexity} of $G$.
Let $D$ be a dataset constructed from $G$ using
Algorithm~\ref{alg:label}, we say that $m$ is a G-complexity of $D$.

\end{definition}

Definition~\ref{dfn:G-complexity} uses the number of production rules
that are not expanded to $\epsilon$
to describe complexities of the grammar and any dataset constructed
from it. This is in the same spirit as the Kolmogorov
Complexity. Kolmogorov Complexity, defined over strings,
is the length of the shortest program that generates the string.
In our context, as datasets are sets of strings, we draw the analogy
between string generating programs and production rules and use the
number of rules as a proxy to measure the dataset complexity.

\begin{example}
\label{exp:G-complexity}

The G-complexity of grammar $G$ given in Example~\ref{exp:main} is
10. 10 is a G-complexity of any datasets generated from $G$ using
Algorithm~\ref{alg:label}.
\end{example}

From Definition~\ref{dfn:G-complexity}, the following holds.
\begin{proposition}
\label{prop:uniqueG}

Every e-grammar has an unique G-complexity. A dataset can have more
than one G-complexity.
\end{proposition}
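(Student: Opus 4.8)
The plan is to treat the two assertions separately, as they are of quite different flavour. The first---every e-grammar has a unique G-complexity---is essentially an unwinding of Definition~\ref{dfn:G-complexity}: for a fixed e-grammar $G = \langle N, T, P, S\rangle$ the set $\{r \in P \mid \text{the right-hand side of } r \text{ is not } \epsilon\}$ is uniquely determined, hence so is its cardinality $m$. I would just record that G-complexity is a well-defined function of the grammar, so it is a single natural number, and move on.

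For the second assertion---a dataset can have more than one G-complexity---I would exhibit a single dataset $D$ that Algorithm~\ref{alg:label} can output from two e-grammars of different G-complexity. The tool is a padding construction. Start from any e-grammar $G_1 = \langle N, T, P, S\rangle$ and let $D$ be any dataset that Algorithm~\ref{alg:label} outputs on $G_1$ for some $l$ and $t$ (such a run exists and terminates because of the time-based component of the stop condition); then $D$ has the G-complexity $m$ of $G_1$. Now build $G_2$ by adjoining a fresh nonterminal $A$ to $N_v$ together with the single rule $A \rightarrow SS \mid \epsilon$, keeping the start symbol $S$. I would check that $G_2$ is still an e-grammar (the new rule has exactly the shape required of a non-terminal rule, and the partition $N_v \cup N_t$ is preserved) and that its G-complexity is $m+1$.

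It remains---and this is the only delicate point---to argue that $D$ is also ``constructed from $G_2$'' in the sense of Definition~\ref{dfn:G-complexity}. Here I would observe that $A$ appears on no right-hand side, hence is unreachable from $S$, so the new rule is never applied in any derivation from $S$ in $G_2$; consequently the derivations from $S$ in $G_2$ coincide with those in $G_1$, and the set of possible outputs of Algorithm~\ref{alg:label} (strings with their labels and explanations) is unchanged. Thus $D$ is also an output on $G_2$, giving it G-complexities $m$ and $m+1$. The main obstacle is therefore not mathematical but a definitional subtlety: since Algorithm~\ref{alg:label} is randomised, I must phrase the claim in terms of a dataset that \emph{can} be output by the algorithm, and make sure the padding rule genuinely cannot perturb any such run; if one wished to sidestep even this, one could instead spell out a fully concrete minimal example (a one-rule terminal e-grammar whose sole output is a single $\npos$ string, padded as above).
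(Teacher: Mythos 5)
Your proof is correct, and for the second assertion it takes a genuinely different route from the paper. The paper exhibits one concrete dataset with two explicit generating grammars: for $\Delta$ a set of distinct length-2 strings $s_i=ab$, it builds $G_1$ with rules $S\rightarrow A_iB_i\mid\epsilon$, $A_i\rightarrow\epsilon$, $B_i\rightarrow ab$ and $G_2$ with rules $S\rightarrow A_iB_i\mid\epsilon$, $A_i\rightarrow a$, $B_i\rightarrow b$, so the same $\Delta$ has G-complexities $2n$ and $3n$; every rule in both grammars actually participates in derivations. Your argument instead pads an arbitrary e-grammar with an unreachable nonterminal $A$ and rule $A\rightarrow SS\mid\epsilon$, raising the G-complexity from $m$ to $m+1$ without changing the derivations from $S$, hence without changing what Algorithm~\ref{alg:label} can output. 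This is more general -- it shows that essentially \emph{every} dataset obtainable from an e-grammar has more than one G-complexity, not just a hand-picked one -- and it correctly handles the only delicate points: the new rule has the required non-terminal shape (the definition only asks $B,C\in N$), nothing in Definition~\ref{def:eGrammar} forbids useless symbols, and the randomised algorithm is handled by phrasing the claim as ``can be constructed,'' which matches the paper's own wording. The trade-off is that the paper's witness grammars contain no useless productions, so its example also demonstrates that multiplicity of G-complexity is not an artefact of padding with unreachable rules, whereas your construction relies precisely on such padding; your fallback concrete example is fine but not needed.
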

\begin{proof}
(Sketch.) As G-complexity is defined on e-grammars, one
  obtains it by counting production rules in the grammar.
  As a dataset can be constructed from more than one e-grammar using
  Algorithm~\ref{alg:label}, these e-grammars can have different
  g-complexities, so the dataset can have more than one
  G-complexity. For instance, let $\Delta = \{s_1, \ldots, s_n\}$ be
  the set of strings in a dataset \suchthat{} the length of strings is
  2, and $s_i \neq s_j$ for $i \neq j$, then both e-grammars
  $G_1$ and $G_2$ construct $\Delta$ as follows.

  $G_1 = \langle N, T, P_1, S\rangle$, $G_2 = \langle N, T, P_2,
  S\rangle$, where
  \begin{itemize}
  \item
    $N = \{A_1, \ldots, A_n, B_1, \ldots, B_n\}$,

  \item
    $T$ is the set of symbols in $\Delta$,

  \item
    $P_1$ is \suchthat{} for each $s_i = ab \in \Delta$,

    \begin{tabular}{lcl}
        1. $S \rightarrow A_iB_i \mid \epsilon \in P_1$, &&
        2. $A_i \rightarrow \epsilon \in P_1$,\\
        3. $B_i \rightarrow ab \in P_1$, &&
        4. Nothing else is in $P_1$.\\
    \end{tabular}

  \item
    $P_2$ is \suchthat{} for each $s_i = ab \in \Delta$,

    \begin{tabular}{lcl}
    1. $S \rightarrow A_iB_i \mid \epsilon \in P_2$, &&
    2. $A_i \rightarrow a \in P_2$,\\
    3. $B_i \rightarrow b \in P_2$, &&
    4. Nothing else is in $P_2$.\\
    \end{tabular}

  \item
    $S$ is the start symbol.
  \end{itemize}

\noindent
  G-complexities of $G_1$, $G_2$ are $2n$ and $3n$, respectively.
\end{proof}

\section{Evaluation over SHAP and LIME}
\label{sec:evaluation}
We evaluate the performance of two popular feature importance XAI methods, SHAP and LIME, over generated benchmark datasets using the proposed approach.

{\bf SHapley Additive exPlanations (SHAP)} is a method that gives
individual, thus ``local'', explanations to black-box machine learning
predictions \cite{Lundberg17}. It is based
on the coalitional game theory concept {\em Shapley value}.
Shapley value is defined to answers the question: ``What is the
fairest way for a coalition to divide its payout among the players''?
It assumes that payouts should be assigned to players in a game
depending on their contribution towards total payout. In a machine
learning context, feature values are ``player''; and the prediction is
the ``total payout''. The Shapley value of a feature represents its
contribution to the prediction and thus explains the prediction. SHAP
is ``model-agnostic'' thus independent of underlying prediction
models. For a data point $x$, SHAP computes the marginal contribution
of each feature to the prediction of $x$. In this work, we use the
tree-based model, TreeSHAP, for estimating Shapley values of features
introduced in \cite{Lundberg18}, as which is shown to be a superior
method than the Kernel SHAP introduced
in \cite{Lundberg17}.


{\bf Local Interpretable Model-Agnostic Explanations (LIME)} is
another method to explain individual predictions of
machine learning models. LIME also is a model-agnostic
approach, so it is applicable to any classifier \cite{Ribeiro0G16}.
LIME tests how predictions change when a user perturbs the input
data. Given a black box model $f$ and a data instance $x$, to
explain the prediction of $x$ made with $f$, LIME generates a set of
perturbed instances around $x$ and compute and their corresponding
predictions. It then creates an interpretable model $g$ based on
generated data to approximate and explain $f$. LIME provides an
explanation as a list of feature contributions to the prediction of
the instance $x$. This highlights feature changes that have the most
influence to the prediction.

To evaluate the performance of SHAP and LIME, we create 5 datasets containing 1000, \ldots,
5000 strings from each e-grammar (see Definition \ref{def:eGrammar} in Section \ref{sec:dataset}), respectively. Each e-grammar is defined on 8 nonterminal
symbols, contains 40 rules and 2 terminal symbols. The
maximum length of the right-hand side of terminal rules is
2. The \pos-threshold $t$ is 8. We compute $k$-accuracy for
$k=8$. We use a Random Forest classifier with 100 trees, split
training and testing with a 80/20 ratio, and returns the average AUC
and $k$-accuracy for SHAP and LIME.
Results are shown in Figure~\ref{fig:different_sample_size}. We see
that SHAP performs consistently better than LIME (26\% higher on
average). Classification and SHAP performances improve as the number
of samples increases whereas LIME performance largely remains.

\begin{figure}[h]
\centerline{
        \includegraphics[trim=20 10 20 5,
        width=0.45\textwidth]{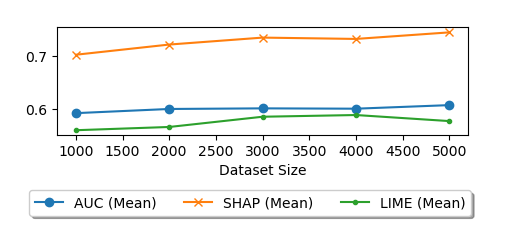}
        \includegraphics[trim=20 10 20 5,
        width=0.45\textwidth]{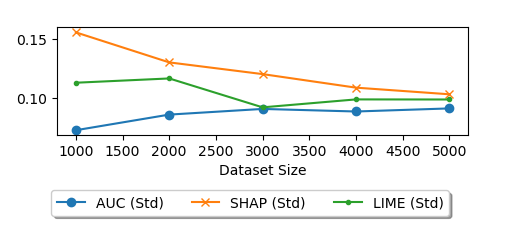}
}
\vspace{-4pt}
\caption{Classification AUC and explanation $k$-accuracy under
        different dataset sizes (Left: mean; Right: standard
        deviation). \label{fig:different_sample_size}}
\vspace{-20pt}
\end{figure}

To illustrate relations amongst G-complexity, classification and
explanation, we create datasets with specified g-complexities by
randomly generating e-grammars with the desired number of production
rules and evaluate classification and explanation performances. The
results are shown in Figure~\ref{fig:complexity}. For
string lengths 20 to 35 and G-Complexities 20 to 60, we construct 100
e-grammars each generating a dataset containing 10,000 strings with
the ratio between \pos{} and \npos{} samples in the range of $[0.4,
0.6]$. The \pos-threshold $t$ and $k$ in $k$-accuracy are 6 for
string length 20, and 8 for the rest.
\vspace{-20pt}
\begin{figure*}
        \includegraphics[trim=118 35 118 0,width=1\textwidth]{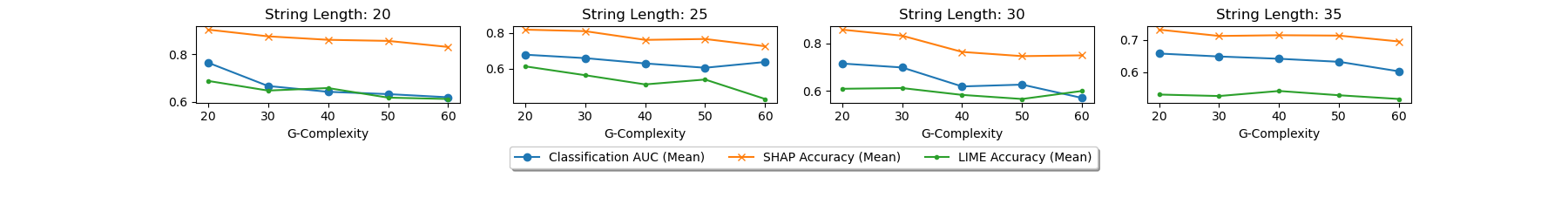}
        \includegraphics[trim=118 40 118 0,width=1\textwidth]{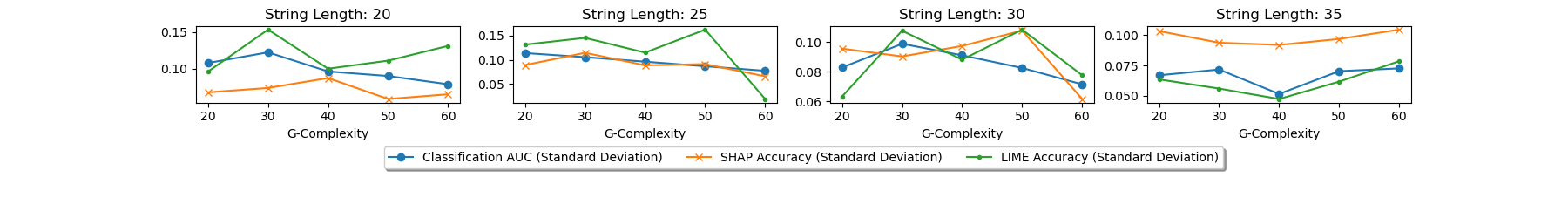}
\vspace{-15pt}
\caption{Classification AUC and explanation $k$-accuracy for different
        g-complexities (Top: mean; Bottom: standard
        deviation).\label{fig:complexity}}
\vspace{-30pt}
\end{figure*}

From Figure~\ref{fig:complexity}, we see that for all string lengths,
classification accuracy, SHAP and LIME performances all negatively
correlate to G-Complexity, as summarised in Table~\ref{table:corr},
with SHAP performing consistently better than LIME in all cases (37\%
higher on average). SHAP and LIME accuracies are
positively correlated to classification AUC at 0.62 and 0.46,
respectively, averaging for all string lengths in
Figure~\ref{fig:complexity}.

\vspace{-30pt}
\begin{table}[h]
\begin{small}
\begin{center}
\caption{Correlations between G-Complexity and mean performances of
classification AUC, SHAP and LIME $k$-accuracies.
\label{table:corr}}
\begin{tabular}{|c|ccc|}
\hline
String Length & Classification & SHAP $k$-acc. & LIME $k$-acc. \\
\hline
20 & -0.88 & -0.97 & -0.92 \\
25 & -0.76 & -0.95 & -0.91 \\
30 & -0.95 & -0.93 & -0.53 \\
35 & -0.94 & -0.88 & -0.45 \\
\hline
\end{tabular}
\end{center}
\vspace{-10pt}
\end{small}
\end{table}

To further validate these results, we expand the size of alphabet to
4, set string length to 25 and repeat the experiment. Results are
in Figure~\ref{fig:alphabet4}. Classification and explanation
performances are negatively correlate to G-Complexity with SHAP
performing better than LIME (49\% higher on average).

\begin{figure}[H]
\centerline{
        \includegraphics[trim=20 10 20 0,
        width=0.45\textwidth]{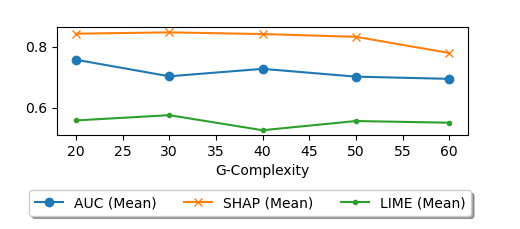}
        \includegraphics[trim=20 10 20 0,
        width=0.45\textwidth]{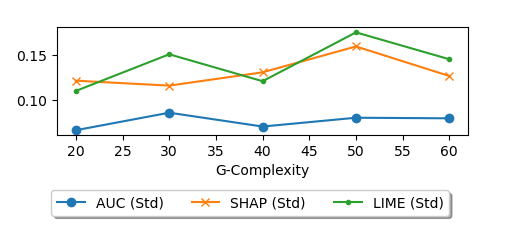}
}
\caption{Classification AUC and explanation $k$-accuracy for alphabet
        size 4 (Left: mean; Right: standard
        deviation).\label{fig:alphabet4}}
\end{figure}

\section{Related Work}
\label{sec:relatedWork}


A survey on machine learning
interpretability is presented in \cite{Carvalho19}. It provides an overview on interpretability while
focusing on the societal impact and interpretability metrics. It
presents 12 model-agnostic explanation methods including SHAP and LIME
as well as Anchors \cite{Ribeiro18} and Influence
Functions \cite{Koh17}. There is no quantitative comparison of these
methods in the study.

A comparison of
LIME, DeepLIFT \cite{Shrikumar17} and SHAP with human
explanations is conducted in \cite{Lundberg17}. They report a stronger agreement between human
explanations and SHAP than with LIME. \cite{Lundberg19} show
that SHAP is more consistent with human intuition in their experiments
than other approaches. \cite{Honegger18} present a comparison of SHAP and LIME using their
Explanation Consistency Framework. Three requirements are
proposed: Identity - identical objects
must have identical explanations; Separability - nonidentical objects
cannot have identical explanations; and Stability - similar objects
must have similar explanations. They show that SHAP meets all
requirements LIME fails at identity. However, they do not measure explanation
accuracy. \cite{Robnik-SikonjaB18} define properties for outputs
generated by explanation methods including accuracy, fidelity,
consistency, stability, comprehensibility, importance, novelty,
and others. However, they do not provide any concrete approach for
measuring these properties. While presenting an algorithm for generating
counterfactual explanation, \cite{Rathi19} show that LIME does not guarantee to
perfectly distribute prediction amongst feature values whereas SHAP
does. Thus LIME does not offer a globally consistent explanation as
SHAP. While pointing out this theoretical difference, no
quantitative evaluation is performed. \cite{Wilson18} present the concept of ``Three Cs of
Interpretability'', completeness, correctness and
compactness. Completeness refers to the coverage of the explanation in
terms of the number of instances comprised by the
explanation. Correctness means the explanation must be
true. Compactness means the explanation should be succinct. They
present their study in a healthcare setting with similar examples to
the query instance considered as explanations. They do not study
explanations in terms of features.

A framework to hide the biases of black-box
classifier is proposed in \cite{Slack20}. Specifically, they use biases to make black-box classifier
discriminatory to effectively fool explanation techniques such as SHAP
and LIME into generating incorrect explanations which do not reflect
the discriminatory biases in the data. They find that LIME is more
vulnerable than SHAP to their attacks.



\section{Conclusion}
\label{sec:conclusion}

A key challenge in current XAI research is to develop robust ways to
evaluate explanation methods. The lack of qualitative evaluation is
largely due to the missing of explanation ground truth in the existing
literature. In this work, while focusing on binary classification,
we present a definition for correct explanation,
a metric for explanation evaluation and provide an algorithm for
constructing datasets with correct explanation ground truth for
quantitatively evaluating model agnostic explanation algorithms. We
create datasets as languages of grammars and set explanations as
substrings created from repeated application of production rules. We
introduce G-complexity, modelled after the Kolmogorov Complexity of
strings, to describe dataset complexity and show that both
classification and explanation become harder as datasets become more
complex. We evaluate SHAP and LIME with our approach and show that
SHAP perform better than LIME throughout and SHAP has a stronger
correlation to classification performance than LIME.
For future work, we will perform human user studies to see whether our
notion of correct explanation corresponds to human explanations. We
will experiment with other model-agnostic explainer such as
Anchors \cite{Ribeiro18}. As our notion of correct explanation is
similar to theirs in spirit, it will be interesting to see how Anchors
performs against SHAP. Lastly, we will also extend our approach to
multi-class classification and regression.

%
%
 \bibliographystyle{splncs04}
 \bibliography{mybibliography}
%
%
%
%
%
\end{document}